\newtheorem{definition}{Definition}
\newtheorem{assumption}{Assumption}
\title{Networked Restless Multi-Arm Bandits with Reinforcement Learning}
\author{%
  Hanmo Zhang \\
  Carnegie Mellon University\\
  Pittsburgh, PA 15213 \\
  \texttt{hanmozha@andrew.cmu.edu} \\
  \And
  Zenghui Sun \\
  Georgia Institute of Technology \\
  Atlanta, GA 30339 \\
  \texttt{zsun424@gatech.edu} \\
  \And
  Kai Wang \\
  Georgia Institute of Technology \\
  Atlanta, GA 30339 \\
  \texttt{kwang692@gatech.edu} \\
}
\begin{document}

\maketitle

\begin{abstract}
Restless Multi-Armed Bandits (RMABs) are a powerful framework for sequential decision-making, widely applied in resource allocation and intervention optimization challenges in public health. However, traditional RMABs assume independence among arms, limiting their ability to account for interactions between individuals that can be common and significant in a real-world environment. This paper introduces Networked RMAB, a novel framework that integrates the RMAB model with the independent cascade model to capture interactions between arms in networked environments.
We define the Bellman equation for networked RMAB and present its computational challenge due to exponentially large action and state spaces. 
To resolve the computational challenge, we establish the submodularity of Bellman equation and apply the hill-climbing algorithm to achieve a $1-\frac{1}{e}$ approximation guarantee in Bellman updates. 
Lastly, we prove that the approximate Bellman updates are guaranteed to converge by a modified contraction analysis. 
We experimentally verify these results by developing an efficient Q-learning algorithm tailored to the networked setting. Experimental results on real-world graph data demonstrate that our Q-learning approach outperforms both $k$-step look-ahead and network-blind approaches, highlighting the importance of capturing and leveraging network effects where they exist.

\end{abstract}
\section{Introduction}
\label{sec:introduction}

Public health challenges such as infectious disease control, vaccination strategies, and chronic illness management require sophisticated sequential decision-making under uncertainty, where timely decisions can significantly impact population health outcomes~\citep{world2019world}. The Restless Multi-Armed Bandit (RMAB) framework has emerged as a powerful tool for addressing such sequential decision-making problems under resource constraints. Prior work has successfully applied variations of RMABs to various public health settings, such as optimizing treatment strategies for infectious diseases~\citep{mate2020collapsing}, designing treatment policies for tuberculosis patients ~\citep{mate2021rmabplanning}, efficient streaming‑patient intervention planning \citep{mate2022streamingrmab}, and fair resource allocation across patient cohorts \citep{li2022fairnessrmab}.

However, a significant limitation of traditional RMAB models is the assumption of independence among arms. In many public health applications, the state of one individual directly affects others due to network effects. For example, in epidemic processes infection propagates along contact networks, so an individual's health status changes the risk faced by their neighbors \citep{pastor2001epidemic,wang2003epidemic,pastor2015epidemic,kiss2017mathematics}. During the COVID‑19 pandemic, the impact of interventions such as vaccination or quarantine depended not only on who was targeted but also on the topology of the underlying interaction graph \citep{world2020world,funk2010modelling}.
Ignoring such dependencies can yield sub‑optimal resource allocation, higher transmission rates, and increased morbidity. To model such interactions, the Independent Cascade (IC) model has been widely used to capture the probabilistic spread of influence through a network~\citep{kempe2003maximizing}. 

Our work introduces the \textbf{Networked Restless Multi-Armed Bandit (NRMAB)} framework, which integrates the RMAB model with the Independent Cascade model to account for network effects. This enables more realistic representations of how interventions on one individual can influence the health states of others. By incorporating network effects, our model allows the action on one arm to influence not only its own state transitions but also those of neighboring arms through cascades. 

We formulate Bellman’s equation for this networked problem and prove that the value function is submodular. Because activation is probabilistic and arms can passively change states, traditional submodularity proofs for independent cascade no longer work \citep{kempe2003maximizing}.We adapt the original proof to this setting, accommodating probabilistic activation and passive state changes. Submodularity in turn unlocks a greedy hill‑climbing action-selection policy for Bellman equation whose return is at least $(1-1/e)$ of the optimal~\citep{nemhauser1978analysis}.

We then establish that the Bellman operator with hill‑climbing action selection is a $\gamma$‑contraction. Because greedy selection can be sub‑optimal, classical contraction proofs that rely on optimal action selection no longer apply. We design an equivalent multi-bellman operator with a meta‑MDP and show this operator contracts under the supremum norm \citep{carvalho2023multibellmanoperatorconvergenceqlearning}. Value iteration converges linearly, and finite‑horizon implementations inherit tight error bounds, ensuring practical algorithms remain stable and sample‑efficient.

Building on this theoretical foundation, we develop a Q-learning algorithm for NRMABs. Our algorithm uses hill-climbing action selection for the Bellman equation to approximate the optimal policy without the need to compute the exact value function, which is computationally infeasible in large networks. We validate our approach through experiments on synthetic networks, demonstrating that our network-aware algorithm outperforms network-blind baselines, including the traditional Whittle Index policy~\citep{whittle1988restless}. These results highlight the importance of capturing network effects in sequential decision-making problems and suggest that NRMABs can provide more effective intervention strategies in public health and other domains where networked interactions are significant.

\section{Related Works}
\label{sec:related_works}
\paragraph{Restless multi-armed bandits}
RMABs, first introduced by \citet{whittle1988restless}, extend the classic Multi-Armed Bandit framework to scenarios where each arm evolves over time regardless of whether it is selected, making a powerful model for decision-making problems in uncertain and evolving environments. Finding optimal policies for RMABs is PSPACE-hard \citep{papadimitriou1999complexity}, leading to the development of various approximation algorithms, such as the Whittle index policy \citep{whittle1988restless}. RMABs have been applied in domains such as machine maintenance~\citep{glazebrook2006index}, healthcare~\citep{mate2020collapsing}, and communication systems~\citep{liu2010indexability}.

\paragraph{Independent Cascade Model}
The Independent Cascade model, introduced by ~\citet{kempe2003maximizing}, captures the probabilistic spread of influence through networks and is a fundamental framework for studying diffusion processes in social networks. In this model, active nodes have a single chance to activate each inactive neighbor with certain probability, modeling phenomena such as information spread and epidemic propagation. Influence maximization -- selecting a set of initial nodes to maximize the expected spread -- is NP-hard but benefits from submodularity, which allows for efficient approximation algorithms with provable guarantees \citep{nemhauser1978analysis}. Submodular function maximization has been extensively studied and applied to various network optimization problems~\citep{leskovec2007cost, chen2010scalable}.

\paragraph{Networked Bandits}
Prior work has explored extending RMABs to account for network effects. \citet{networked2022} introduced a RMAB framework accounting for movement of people between physical locations. 
\citet{herlihy2023networked} incorporated network effect by giving each arm a "message" action that influences the transition probability of neighboring arms. \citet{agarwal2024network} modifies the restless multi-armed bandit problem such that the reward on each arm depends on the actions performed on its neighboring arms. 

These works confirm the value of incorporating graph structure, yet each targets a specific form of coupling. Our NRMAB framework advances this foundation by modeling probabilistic cascades of state transitions and providing a submodular‑greedy RL solution with contraction guarantees, yielding scalable policies for networked health‑intervention problems.

\paragraph{Q-Learning}
Q-learning~\citep{watkins1992q} is a model-free reinforcement learning algorithm that learns an optimal action-selection policy by iteratively updating Q-values based on observed rewards and transitions. The algorithm is well-suited for decision-making in Markov Decision Processes (MDPs) and has been widely applied in domains such as game-playing agents~\citep{mnih2015human}. While tabular Q-learning is effective for small state spaces, it suffers from scalability issues as the state-action space grows. Deep Q Networks (DQNs)~\citep{mnih2015human} address this limitation by approximating the Q-function using deep neural networks, enabling Q-learning to scale to large state spaces. 

Particularly relevant is ~\cite{dai2017learning}, who explored reinforcement learning for combinatorial optimization problems on graphs and demonstrated that graph structures can be leveraged to learn effective heuristics for NP-hard problems~\citep{dai2017learning}. This motivates our approach where we leverage Q-learning to optimize decision-making in dynamic networked environments.

\section{Problem Setting}
\label{sec:Problem Setting}
\subsection{RMAB Problem Formulation}

A RMAB \citep{whittle1988restless}
consists of \(n\) independent arms that evolve in parallel.
At each timestep the controller may activate at most \(k\) arms.
Let $\mathcal{S}=\{0,1\}^{n}$ and $\mathcal{A}=  \{\boldsymbol{a}\in\{0,1\}^{n}:\sum_{i=1}^{n} a_i=k\}$ where \(s_i\in\{0,1\}\) denotes the two‑state status of arm \(i\) and \(a_i\in\{0,1\}\) denotes the two‑action choice. Quality of the action is determined by a \textbf{reward function}, which we define in the next subsection. 

\paragraph{Independent arm transition} 
Given the state $s$ and the action $a$ of arm $v$, the state transitions to the next state $u$ based on the transition probability \( P_v(s, a, u) \). $P_v(s,a,u)$ is a probability distribution over the next states, which is independent for all arms. We write the independent transition of the current state of all nodes $\boldsymbol{s} = [s_v]_{v \in \mathcal{V}}$ by $P(\boldsymbol{u} | \boldsymbol{s}, \boldsymbol{a}) = \prod\nolimits_{v \in \mathcal{V}} P_v(s_v, a_v, u_v)$.

\begin{assumption}
    We assume that active actions yield higher probabilities of beneficial transitions compared to passive actions: $P(s=0, a=1, u=1) \geq P(s=0, a=0, u=1)$ and $P(s=1, a=1, u=1) \geq P(s=1, a=0, u=1)$.
\end{assumption}

This compact MDP representation, standard in modern RMAB surveys (e.g.\ \citealp{nino2023markovian}), underpins the network extensions developed in subsequent sections.

\subsection{Independent cascade} We now add the IC model \citep{kempe2003maximizing}. We connect arms (now called nodes) through undirected edges \( e \in \mathcal{E} \), where each edge has a weight \( 0 < w_e < 1 \) that represents the probability an active node activates its neighbor via a cascade. We use a function $P_G(\boldsymbol{s}' | \boldsymbol{u})$ to denote the probability that the temporary state $\boldsymbol{u}$ cascades to the next state $\boldsymbol{s}' = [s'_v]_{v \in \mathcal{V}}$ through the graph $G$ and the cascade probability of each edge.

\paragraph{Transition kernel}
Coupling the arm‑wise dynamics \(P(\boldsymbol{u}\mid\boldsymbol{s},\boldsymbol{a})\)
from the RMAB with the cascade yields the full MDP kernel
\begin{equation}\label{eq:full_transition}
  P(\boldsymbol{s}'\mid\boldsymbol{s},\boldsymbol{a})
  \;=\;
  \sum\nolimits_{\boldsymbol{u}\in\{0,1\}^{n}}
     P(\boldsymbol{u}\mid\boldsymbol{s},\boldsymbol{a})\;
     P_G(\boldsymbol{s}'\mid\boldsymbol{u}),
\end{equation}
where
\(P(\boldsymbol{u}\mid\boldsymbol{s},\boldsymbol{a})
  =\prod_{v\in\mathcal{V}} P_v(u_v\mid s_v,a_v)\)
is the independent arm transition introduced earlier.

\noindent\textbf{Reward objective}
Our goal is to select the optimal $k$ nodes at each timestep to maximize the cumulative reward over multiple timesteps \( t \). The reward function \( R(s, a) \) is the immediate reward received per step after taking action \( a \) in state \( s \). \( \mathcal{V} \) represents the set of all nodes in the graph, and \( r(v) \) is the value associated with node \( v \) if it is active (\( s(v) = 1 \)), or zero otherwise. Cumulative reward is formalized using the discounted return where \( \gamma \) is the discount factor (\( 0 \leq \gamma < 1 \)) that prioritizes immediate rewards over distant future rewards. 
\begin{align}\label{eq:discounted_return}
    &
    R(\boldsymbol{s}, \boldsymbol{a}) = \sum\nolimits_{v \in \mathcal{V}} r(v),
    &
    \sum\nolimits_{t=0}^{\infty} \gamma^t R(\boldsymbol{s_{t}}, \boldsymbol{a_{t}}),
    &
\end{align}
\subsection{Network RMAB Problem Formulation}
An instance of the network RMAB problem is composed of a graph $G = (\mathcal{V},\mathcal{E})$, where each node $v \in \mathcal{V}$ represents an arm that can transition between different states $s \in \mathcal{S}$.
Each iteration we have a budget constraint on the actions: $\sum_{i \in [n]} a_i \leq k$. We can cast the Networked RMAB as a discounted Markov decision process  
\(\mathcal{M} = (\mathcal{S},\mathcal{A},P,R,\gamma)\).
\begin{figure*}[h!]
    \includegraphics[width=1\linewidth]{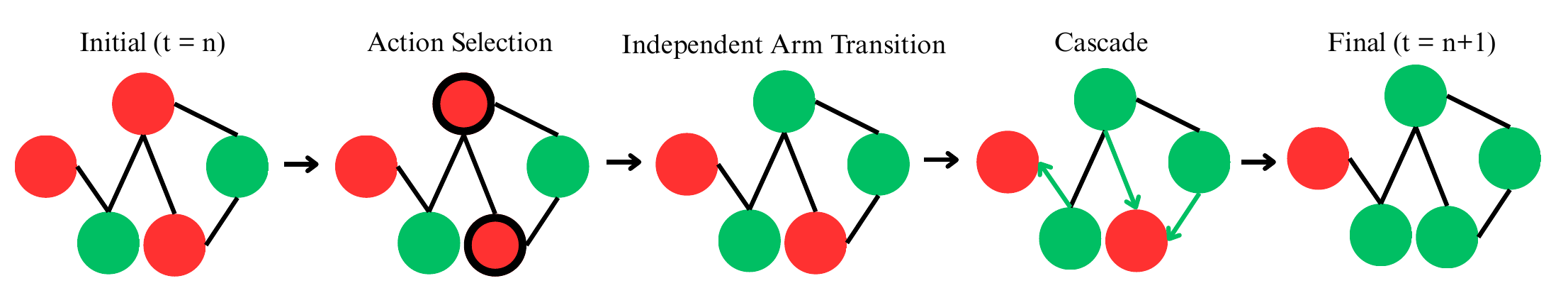}
   \caption{Visual representation of a single Networked RMAB timestep: initial state, selection of \(k\) active actions, independent transitions, cascade propagation, and resulting next state.}
    \label{fig:Visual_problem_formulation}
\end{figure*}
\section{Methodology}
\label{sec:Methodology}
We representing action value using the Bellman equation to select the best actions for each timestep:
\begin{equation}
    V(\boldsymbol{s}) = \max_{\boldsymbol{a} \in \mathcal{A}} Q(\boldsymbol{s}, \boldsymbol{a}), \label{eq:bellman_v}
\end{equation}
\begin{equation}\label{eq:bellman_q}
    Q(\boldsymbol{s},\boldsymbol{a}) = R(\boldsymbol{s},\boldsymbol{a}) + \gamma \sum\nolimits_{u} P(\boldsymbol{u} \mid \boldsymbol{s}, \boldsymbol{a}) \sum\nolimits_{s'} P_G(\boldsymbol{s'} \mid \boldsymbol{u}) V(\boldsymbol{s'}),
\end{equation}

To solve NRMAB problems, we can write down the Bellman equation in Equation \ref{eq:bellman_v} to apply existing RL algorithms like DQN \citep{watkins1992q}. However, the action selection in Equation \ref{eq:bellman_v} is computationally infeasible for large state-action spaces due to exponentially many possible actions.

Our objective is to develop an algorithm capable of consistently selecting near-optimal actions in a scalable manner. To achieve this, we exploit the submodularity of $Q(\boldsymbol{s},\boldsymbol{a})$ to use a \emph{hill-climbing} action selection that only takes $O(n^2)$ time. Because of submodularity, this algorithm yields a $(1-1/e)$-approximation to the true maximum. We refer to this algorithm as Bellman Equation with Hill-Climbing Action Selection.

\begin{algorithm}[H]
\caption{Hill-Climbing Action Selection for the Bellman Equation}
\label{alg:bellman-hillclimb}
\begin{algorithmic}[1]
  \REQUIRE State $s\in\mathcal{S}$, budget $k$, Q-function $Q(\cdot,\cdot)$
  \STATE $A \gets \varnothing$ \COMMENT{initial empty action set}
  \FOR{$j = 1$ \TO $k$}
      \STATE $a^{\star} \gets \displaystyle
             \arg\max_{a \in \mathcal{A} \setminus A}
             Q\!\bigl(s,\,A \cup \{a\}\bigr)$
      \STATE $A \gets A \cup \{a^{\star}\}$
  \ENDFOR
  \RETURN $A$ \COMMENT{greedily constructed size-$k$ action set}
\end{algorithmic}
\end{algorithm}

\subsection{Submodularity of the Q-Function}
Establishing that the state–action value \(Q(\boldsymbol{s},\boldsymbol{a})\) is submodular in the action set \(\boldsymbol{a}\) is pivotal: it unlocks the \(1 - \frac{1}{e}\) performance guarantee of a greedy hill‑climbing strategy that provably avoids the exponential blow‑up of evaluating all \({n \choose k}\) action combinations \citep{kempe2003maximizing}. We proceed with a proof of the submodularity of $Q(\boldsymbol{s},\boldsymbol{a})$.

\begin{restatable}[Submodularity]{theorem}{submodularity}
    Given a submodular value function $V(\boldsymbol{s})$ and a constant state $\boldsymbol{s}$, we show that $Q(\boldsymbol{s},\boldsymbol{a})$ is submodular with respect to $a$.
    \label{thm: submod-q}
\end{restatable}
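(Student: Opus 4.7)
My plan is to adapt the coupling argument of \citet{kempe2003maximizing} to accommodate the two non-standard features of NRMAB: activation of a chosen arm is only probabilistic, and an unchosen arm can still flip to state $1$ through its passive dynamics. Throughout I identify the action vector $\boldsymbol{a}$ with its support set $A \subseteq \mathcal{V}$.

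Since $R(\boldsymbol{s},\boldsymbol{a})$ depends only on $\boldsymbol{s}$, submodularity of $Q(\boldsymbol{s},\cdot)$ reduces to submodularity of the expected future value $H(A) := \mathbb{E}[V(\boldsymbol{s}') \mid \boldsymbol{s}, A]$. I will decouple the randomness by drawing, for each arm $v$, a single uniform $U_v \in [0,1]$, and for each edge $e$ an independent Bernoulli $L_e \sim \text{Ber}(w_e)$. Writing $p_v^b = P_v(s_v,b,1)$ and using Assumption~1 to get $p_v^0 \leq p_v^1$, I set $S_0 = \{v : U_v \leq p_v^0\} \subseteq S_1 = \{v : U_v \leq p_v^1\}$. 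The pre-cascade active set induced by action $A$ is then $B_A = S_0 \cup (A \cap S_1)$, and the post-cascade state is $\sigma_L(B_A)$, the live-edge reachability closure of $B_A$. In this representation $H(A) = \mathbb{E}_{U,L}[V(\sigma_L(B_A))]$.

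The heart of the argument is to establish pointwise submodularity of $A \mapsto V(\sigma_L(B_A))$ for every fixed realization $(U,L)$. Given $A \subseteq A'$ and $v \notin A'$, I will split on whether $v \in S_1 \setminus S_0$. If not, both marginal gains vanish because $B_{A \cup \{v\}} = B_A$ and $B_{A' \cup \{v\}} = B_{A'}$. Otherwise $B_{A \cup \{v\}} = B_A \cup \{v\}$ and $B_{A' \cup \{v\}} = B_{A'} \cup \{v\}$, so each side of the diminishing-returns inequality reduces to $V(\sigma_L(B) \cup R_L(v)) - V(\sigma_L(B))$ evaluated at $B \in \{B_A, B_{A'}\}$, where $R_L(v)$ is the set reachable from $v$. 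Since $A \subseteq A'$ forces $\sigma_L(B_A) \subseteq \sigma_L(B_{A'})$, the multi-element extension of submodularity of $V$ (obtained by a standard telescoping over the elements of $R_L(v)$) yields the required inequality. Integrating over $(U,L)$ then transfers pointwise submodularity to $H$, since submodularity is closed under nonnegative linear combinations.

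The main obstacle I expect is this pointwise step. Unlike classical independent cascade, where the seed set is exactly the action set, here the seeds $B_A$ mix passively activated arms ($S_0$) with deliberately activated arms ($A \cap S_1$). The monotone coupling $S_0 \subseteq S_1$ enforced by Assumption~1 is what keeps each incremental change of $A$ from altering $B_A$ by more than one element, which is precisely what allows the cascade-closure operator $\sigma_L$ to compose cleanly with the submodularity of $V$ and so propagate diminishing returns from $\boldsymbol{s}$ to $\boldsymbol{a}$.
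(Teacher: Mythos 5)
Your proposal follows essentially the same route as the paper's proof: a monotone coupling of the passive and active transition outcomes (your single uniform $U_v$ with thresholds $p_v^0 \le p_v^1$ is exactly the paper's coupled coin flips $x_v, y_v$ under Assumption~1), live-edge coin flips for the cascade, pointwise submodularity of the realized active set composed with the assumed submodularity of $V$, and integration over the randomness. Your write-up is in fact somewhat more explicit than the paper's (the case split on $v \in S_1 \setminus S_0$ and the telescoping multi-element extension of diminishing returns are only implicit there), but the underlying argument is the same.
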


\begin{proof}[Proof sketch] (Full proof in \ref{appendix:submodularity})

We show that for any \( A \subseteq B \subseteq N \) and \( t \notin B \):
\[
Q(s, A \cup \{t\}) - Q(s, A) \geq Q(s, B \cup \{t\}) - Q(s, B)
\]
In $Q(s,a)$, the reward function \( R(s, a) \) is inherently submodular. We focus on the expected future value component:
\[
\sigma(a) = \sum\nolimits_{s', u} P_G(s' \mid u) P(u \mid s, a) V(s')
\]
We model the state transitions and cascades using coupled probabilistic simulations. For each node \( v \in \mathcal{V} \), we simulate two coin flips: $x_v$, which represents the node's outcome in the transition step under a passive action, and $y_v$, the same node's outcome under an active action. We couple these coinflips such that if $x$ results in activation, the corresponding $y$ must also result in activation. For each edge \( e \in \mathcal{E} \), we simulate a coin flip \( z_e \) to determine if an active node activates its neighbor via a cascade. 
With a full set of coinflips $X,Y,Z$, we deterministically know the set of active nodes after applying an action. Let \(\sigma_{XY}(A)\) denote the set of active nodes after the Transition Step, and \(\sigma_Z(\sigma_{XY}(A))\) denote the set of active nodes after both Transition and Cascade Steps.

We know that \(\sigma_{XY}(A) \subseteq \sigma_{XY}(B)\) for \( A \subseteq B \) and \(\sigma_{XY}(A \cup \{t\}) \setminus \sigma_{XY}(A) = \sigma_{XY}(B \cup \{t\}) \setminus \sigma_{XY}(B)\). Using these properties, the submodularity inequality can be rewritten for the independent cascade as:
\[
\begin{aligned}
\sigma_Z(\sigma_{XY}(A \cup \{t\})) - \sigma_Z(\sigma_{XY}(A))  \\
\geq \sigma_Z(\sigma_{XY}(B \cup \{t\})) - \sigma_Z(\sigma_{XY}(B)).
\end{aligned}
\]

Since we assume that $V(s)$ is submodular with respect to its support set,
and each active node contributes positively to the total weighted node value,
it follows that $V(\sigma_{X,Y,Z}(A))$ is also submodular. Consequently, our expected future value can be formulated as:
\[
\sigma(A) = \sum\nolimits_{X,Y,Z} P(XYZ) \cdot V(\sigma_{X,Y,Z}(A))
\]
which is a non-negative linear combination of submodular functions, maintaining submodularity. Therefore, \( Q(s, a) \), being a sum of submodular functions, is itself submodular.
\end{proof}

This submodularity allows us to utilize a $1-\frac{1}{e}$ optimality guarantee when using hill-climbing algorithm in Algorithm \ref{alg:bellman-hillclimb}, a $O(n^2)$ algorithm scalable to larger problems.

\noindent

\subsection{Proof of Contraction for the Bellman Equation with Hill-Climbing Action Selection}

Given submodularity of $Q(s,a)$, each action set of Bellman equation with hill-climbing action selection has an optimality guarantee of $1-\frac{1}{e}$. We want to show that even under this approximate action selection, the Bellman operator for Bellman equation with hill-climbing action selection (Definition \ref{def:bellman-hillclimb}) is guaranteed to converge.

\begin{definition}[Bellman Operator for Bellman Equation with Hill-Climbing Action Selection]
\label{def:bellman-hillclimb}
Let $\mathcal{S}$ be the set of global states, and let $\mathcal{A}$ be the set of all \emph{single} actions (e.g., single nodes) that can be targeted at a given timestep. Suppose we want to build a final action set of size $k$ from $\mathcal{A}$, subject to a budget of $k$.

\noindent
We define the Bellman Operator for this variation of the Bellman equation as $B$,

\begin{equation}
\label{eq:bellman-hc-update}
B V(s) \;=\; \max_{a^{\text{hc}}\in \mathcal{A}}\{R(s,a^\text{hc})+ \gamma
\sum\nolimits_{u\in\mathcal{S}} P\!\bigl(u\mid s,a^{\mathrm{hc}}(s)\bigr)
\sum\nolimits_{s'\in\mathcal{S}} P_G\!\bigl(s'\mid u\bigr)\,V(s')\},
\end{equation}
where \(a^{hc}\) is the output of Algorithm \ref{alg:bellman-hillclimb} and \(Q\) is given in
Equation (\ref{eq:bellman_q}).
\end{definition}

Based on Theorem 4.3 in \citep{nemhauser1978analysis} and the submodularity given by Theorem~\ref{thm: submod-q}, we can show that the greedy algorithm in Algorithm~\ref{alg:bellman-hillclimb} discovers a set of actions that yields a $V(s)$ at least
\(1-1/e\) of the true maximum. 

However, because Algorithm \ref{alg:bellman-hillclimb} does not always yield the optimal action, the traditional Banach Fixed-Point argument for Bellman Operator contraction does not work (see Appendix \ref{appendix: failure}), and we instead construct an alternative approach exploiting the structure of repeated hill‑climbing updates. 

\begin{restatable}[Contraction]{theorem}{contraction}
    Bellman Operator for Bellman Equation With Hill-Climbing Action Selection ($B$) is a $\gamma$ contraction under the supremum norm $\|\cdot\|_\infty$.
    \label{thm: contract}
\end{restatable}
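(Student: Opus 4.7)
The plan is to bypass the failure of the classical Banach argument (which breaks because the hill-climbing outputs on $V_1$ and $V_2$ can differ without either being a true maximizer) by recasting $B$ as a composition of simpler operators on an augmented meta-MDP, following the template of Carvalho et al. Concretely, I would introduce the meta-state space $\tilde{\mathcal{S}} = \{(s,A) : s\in\mathcal{S},\, A\subseteq\mathcal{V},\, |A|\le k\}$, where $A$ tracks the partial action set already committed within one pass of Algorithm~\ref{alg:bellman-hillclimb}. Meta-actions are single nodes $v\in\mathcal{V}\setminus A$; meta-transitions are deterministic when $|A|<k$ (append $v$ to $A$ with zero reward) and stochastic when $|A|=k$ (emit reward $R(s)$, draw $s'\sim P(\cdot\mid s,A)$, reset to $(s',\emptyset)$). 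Applying discount $\gamma$ only at the stochastic step and $1$ at the deterministic sub-steps bundles $k+1$ meta-steps into one super-step of $\mathcal{M}$ while preserving the original discount.

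On $\tilde{\mathcal{M}}$ I would define the meta-Bellman operator $\tilde B$ to mirror the greedy inner loop: for $|A|<k$, $\tilde B\tilde V(s,A) = \tilde V(s,A\cup\{v^{\star}\})$ with $v^{\star} = \arg\max_{v} Q_{\tilde V(\cdot,\emptyset)}(s,A\cup\{v\})$, and for $|A|=k$, $\tilde B\tilde V(s,A) = R(s) + \gamma\sum_{s'} P(s'\mid s,A)\,\tilde V(s',\emptyset)$. Inspecting any fixed point $\tilde V^{\star}$ shows $\tilde V^{\star}(s,\emptyset) = \tilde V^{\star}(s,\{v_1^{\star}\}) = \cdots = \tilde V^{\star}(s,A^{hc}) = R(s) + \gamma\,\mathbb{E}_{s'\mid s,A^{hc}}[\tilde V^{\star}(s',\emptyset)]$, where $A^{hc}$ is the hill-climbing output on $V^{\star} := \tilde V^{\star}(\cdot,\emptyset)$, so $V^{\star}$ is exactly the fixed point of $B$. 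Contraction of $B$ on $\mathcal{S}$ therefore reduces to contraction of $\tilde B^{k+1}$ on $\tilde{\mathcal{S}}$ under $\|\cdot\|_\infty$.

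The main obstacle is proving that $\tilde B$ is non-expansive at the selection sub-states ($|A|<k$) despite its argmax being a discontinuous functional of $\tilde V$; the transition sub-states ($|A|=k$) are the textbook $\gamma$-contraction. My plan is to split $\tilde B\tilde V_1(s,A) - \tilde B\tilde V_2(s,A)$ as $[\tilde V_1(s,A\cup\{v_1^{\star}\}) - \tilde V_2(s,A\cup\{v_1^{\star}\})] + [\tilde V_2(s,A\cup\{v_1^{\star}\}) - \tilde V_2(s,A\cup\{v_2^{\star}\})]$, bound the first term by $\|\tilde V_1-\tilde V_2\|_\infty$ directly, and control the second via the hill-climbing optimality conditions $Q_{\tilde V_i(\cdot,\emptyset)}(s,A\cup\{v_i^{\star}\}) \ge Q_{\tilde V_i(\cdot,\emptyset)}(s,A\cup\{v_{3-i}^{\star}\})$ together with the submodularity of $Q$ from Theorem~\ref{thm: submod-q}, which by diminishing returns forces the discrete switch $v_1^{\star}\mapsto v_2^{\star}$ to accompany only an $O(\|\tilde V_1-\tilde V_2\|_\infty)$ swing in $\tilde V_2$. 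Chaining this sub-step non-expansiveness with the transition layer's $\gamma$-contraction across the $k+1$ sub-steps of a super-step yields the $\gamma$-contraction of $\tilde B^{k+1}$, hence of $B$.
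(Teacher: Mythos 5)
Your high-level template matches the paper's: both recast $B$ as a multi-Bellman operator on a meta-MDP whose states carry the partial action set, and both recover the $\gamma$ factor by composing sub-steps so that the discount accumulates to $\gamma$ only once per environment transition. The difference is in how the selection sub-steps are defined, and that difference is where your argument has a genuine gap. You define the selection-layer operator by hard-coding the greedy rule, $\tilde B\tilde V(s,A)=\tilde V(s,A\cup\{v^{\star}\})$ with $v^{\star}=\arg\max_{v}Q_{\tilde V(\cdot,\emptyset)}(s,A\cup\{v\})$, so the argmax is a functional of $\tilde V$. Your split then produces the cross term $\tilde V_2(s,A\cup\{v_1^{\star}\})-\tilde V_2(s,A\cup\{v_2^{\star}\})$, which is exactly the ``loss from approximate actions'' term that Appendix~\ref{appendix: failure} identifies as the reason the classical argument fails. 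For arbitrary $\tilde V_2$ this term is not controlled at all: the optimality conditions you invoke constrain $Q_{\tilde V_2(\cdot,\emptyset)}$, not the values of $\tilde V_2$ at partial-set meta-states, and nothing ties the two together off the fixed point. Even if you restrict to meta-value functions consistent with $Q$, the standard two-sided argument (swap $v_1^{\star}$ and $v_2^{\star}$ using each function's own optimality) yields a bound of order $2\gamma\|V_1-V_2\|_\infty$ for that term, giving a total Lipschitz constant around $3\gamma$ rather than $\gamma$. The appeal to submodularity and ``diminishing returns'' to shrink this to the needed size is asserted, not proved, and Theorem~\ref{thm: submod-q} gives no quantitative modulus relating a change in $V$ to a change in the greedy selection's value; a small perturbation of $V$ can flip the argmax to a node whose $\tilde V_2$-value differs by a constant.

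The paper avoids this entirely by making the meta-operator a genuine Bellman max: in Definition~\ref{def:new_bell_op}, $(\widetilde{B}V)(\widetilde s_j)=\max_{a\in\mathcal{A}\setminus A}\{\widetilde R(\widetilde s_j,a)+\widetilde\gamma V(\widetilde s_{j+1})\}$ optimizes over the value function being operated on, so the elementary inequality $|\max_x f(x)-\max_x g(x)|\le\max_x|f(x)-g(x)|$ applies at every sub-step and each layer is a clean $\widetilde\gamma$-contraction with $\widetilde\gamma^{\,k}=\gamma$; the identification with hill-climbing is handled separately in the equivalence theorem rather than baked into the operator. To repair your version you would either need to adopt that definition (letting the operator's own max do the selection) or supply a quantitative stability lemma for the greedy argmax under perturbations of $V$, which submodularity alone does not provide.
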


\begin{proof}[Proof Sketch]

Our proof centers around redefining $B$ as a multi-bellman operator as defined by \citet{carvalho2023multibellmanoperatorconvergenceqlearning}.

\paragraph{Intuition} Because the Bellman equation with hill climbing action selection finds the approximate rather than the best set of actions at each timestep, the traditional proof for contraction does not work. Thus, we deconstruct our Bellman Operator for Hill-Climbing Action Selection (definition \ref{def:bellman-hillclimb}) into a multi-bellman operator $\widetilde{B}$ as defined by \citep{carvalho2023multibellmanoperatorconvergenceqlearning}. Each application of this operator selects the single next best action to take given a state and partial action set, effectively mimicking one step in the hill-climbing algorithm. Applying this $k$ times becomes equivalent to one application of $B$. We prove this equivalence, and borrowing the proof of convergence for the multi-bellman operator, prove that $B$ thus converges.

\begin{definition}[Multi-Bellman Operator for the Hill-Climbing Variant]
\label{def:new_bell_op}
We recast our incremental set-building procedure as follows.  
Let each ``meta-state'' be denoted by $\widetilde{s} = (s,A, t)$, 
where $s$ is the environment state, 
$A \subseteq \mathcal{A}$ is the set of actions selected so far, and $t$ is the current timestep. Additionally, define $\widetilde{\gamma}^k = \gamma$.
Write $\widetilde{s}_0 = \bigl(s,\varnothing,0\bigr),\,\widetilde{s}_1=\bigl(s,\{a_0\}, 1\bigr),\,\dots,\widetilde{s}_k=\bigl(s,\{a_0,\dots,a_{k-1}\},k\bigr).$
Define the modified reward
\[
  \widetilde{R}\bigl(\widetilde{s}, a \bigr) 
  =  
  \frac{1}{\widetilde{\gamma}^t}(R\left(s, A \cup \{a\} \right) - R(s, A)).
\]

Then for $t<k$, picking a \emph{single} new action $a$ corresponds to moving from 
$\widetilde{s}_j = (s,A, t)$ 
to 
$\widetilde{s}_{j+1} = \bigl(s,A \cup \{a\}, t+1\bigr)$,  
and we define 
\[
(\widetilde{B}V)\bigl(\widetilde{s}_j\bigr)
~
=\;
\max_{\,a\in\mathcal{A} \setminus A}
\Bigl\{
  \widetilde{R}\bigl(\widetilde{s}_j,\,a\bigr)
  \;+\;
  \widetilde{\gamma}\,V\bigl(\widetilde{s}_{j+1}\bigr)
\Bigr\}.
\]

\noindent
\textbf{At $t = k$:}  
the action set $A$ is fully chosen (i.e.\ $\widetilde{s}_{k} = (s,\{a_0,\dots,a_{k-1}\}, k)$). 
One more application of $\widetilde{B}$ (when $t=k$) then \emph{applies} these actions to $s$, 
causing a transition to $s'$.
\[
  (\widetilde{B}V)(s,A,k)
  \;=\;
     \widetilde{\gamma}\,
     \mathbb{E}_{s'}\!\bigl[\,V\!\bigl(s',\varnothing,0\bigr)\bigr]
\]
Hence, $\widetilde{B}$ captures both the step-by-step incremental selection of actions for $t<k$, and the final transition applying the chosen set $A$ when $t=k$.
\end{definition}
After defining $\widetilde{B}$, we leverage it to construct a MDP $\mathcal{M}_{hc}$ for this Bellman operator -- one that is equivalent to the MDP for $B$.
\begin{restatable}[Hill-Climbing Equivalence]{theorem}{equivalence}    
    Applying $k$ iterations of Multi-Bellman Operator for Hill-Climbing Variant (Definition \ref{def:new_bell_op}) is equivalent to one application of Bellman Operator for Bellman equation with hill climbing action selection (Definition \ref{def:bellman-hillclimb}) with an action budget of $k$.
    \label{thm:equiv}
\end{restatable} 
\begin{proof}[Proof Sketch] (Full proof in Appendix \ref{appendix:equiv})
Apply $\widetilde B$ exactly $k$ times.  Each
step adds a \emph{marginal} reward
$\widetilde R(\widetilde s_t,a_t)
=\widetilde\gamma^{-(k-t)}
\bigl(R(s,A_t\!\cup\!\{a_t\})-R(s,A_t)\bigr)$,
so the discounted sum telescopes:
\[
\sum\nolimits_{t=0}^{k-1}\widetilde\gamma^{\,t}\widetilde R(\widetilde s_t,a_t)
=R\bigl(s,\{a_0,\dots,a_{k-1}\}\bigr).
\]
After the $k$‑th pick the augmented state resets to the environment
state $s'$, and because $\widetilde\gamma^{\,k}=\gamma$ the future value
is $\gamma V(s')$.  Hence
\[
(\widetilde B^{\,k}V)(s)=R\bigl(s,\{a_0,\dots,a_{k-1}\}\bigr)+\gamma V(s')
=(BV)(s).
\]
Therefore $\,\widetilde B^{\,k}\,$ coincides with the standard Bellman
update, so all usual contraction and
convergence results carry over. This concludes the proof of Theorem \ref{thm:equiv}.
\end{proof}

Given the equivalence shown in Theorem \ref{thm:equiv}, we can directly apply the proof of Lemma 1 in \cite{carvalho2023multibellmanoperatorconvergenceqlearning} to $\widetilde{B}$. By following their proof, we find that $\widetilde{B}^k$ is a $\widetilde{\gamma}^k$ contraction (for the full process see Appendix \ref{appendix:contraction}). Because $\widetilde{\gamma}^k=\gamma$ and $\widetilde{B}^k=B$, $B$ is a $\gamma$-contraction. This concludes the proof of Theorem \ref{thm: contract}.
\end{proof}
Thus, we have show that even though greedy hill-climbing action selection only provides an action $1-\frac{1}{e}$ of the optimum, Bellman equation using hill-climbing action selection is still a $\gamma$-contraction. 
\subsection{Deep Q-Learning with Hill-Climbing}
To solve a NRMAB problem, we propose a Deep Q-Network using hill-climbing. Similar to a traditional DQN, our neural network takes in a representation of the state and action and pass it through three fully connected hidden layers to producing a Q-value for each state-action pair \( Q(\boldsymbol{s}, \boldsymbol{a}) \). To scale this to large action spaces, we iterate through the list of all possible single actions and utilize a neural network to predict the Q value of each single action $a$. Then, we greedily select the $k$ actions with the highest Q-values. This leverages the submodular properties of the Bellman equation to achieve the \(1 - \frac{1}{e}\) performance guarantee. We combine DQN and hill-climbing action selection to design a scalable Q-learning algorithm (Algorithm \ref{alg:dqn_tianshou_refined}) to solve NRMAB problems.

\begin{algorithm}[H]
\caption{Hill-Climbing DQN}
\label{alg:dqn_tianshou_refined}
\begin{algorithmic}[1]
\STATE \textbf{Initialization:} 
Neural network $Q(s,a; \theta)$, replay buffer
\WHILE{until $\theta$ converges}

\STATE \textbf{Hill-climbing action:} intervention set $A = \emptyset$.
    \WHILE{$|A| < k$ (budget for intervention)}
    \STATE Solve $v^* = \arg\max_{v \in V} Q(s, 1_{A\cup \{v\}})$
    \STATE Update $A \gets A \cup \{ v \}$
    \ENDWHILE
    \STATE Execute $a = 1_A$ and collect experience 
\STATE \textbf{DQN Updates:} Sample mini-batches from the replay buffer and run gradient descent to update $\theta$.
\ENDWHILE
\STATE \textbf{Output:} Q network parameter $\theta$
\end{algorithmic}
\end{algorithm}

\paragraph{Graph Neural Network Optimization} In order to better account for network effects, we optimize this approach by implementing a graph neural network in addition to a simple DQN to leverage relational dependencies within the network. We maintain all other properties for the GNN, including using hill-climbing action selection. 

\section{Experiments}
\label{sec:Experiments}
\subsection{Domain}
The motivating application is health‑care intervention planning, where limited resources (e.g., vaccinations, diagnostic tests, treatment slots, adherence reminders) must be allocated over time while infection or non‑adherence spreads through a contact network. Classical RMAB models treat patients (arms) as independent, yet in epidemiology and behavioral health network spill‑overs can impact outcomes. Our experiments therefore contrast the effectiveness of network‑blind approaches with our network-aware algorithm developed for the NRMAB model.
\subsection{Simulation}
We evaluate our algorithms on a real network collected from a village in India through household surveying \cite{data}, augmenting it with synthetic node attributes and cascade probabilities. The data is given as an edgelist where each edge represents real world contact. The data depicts a multigraph, but we remove redundant edges to create a simple graph. After processing, the network contains 202 nodes and 692 edges. We use the edgelist to build our graph, then randomly generate attributes for each node in the graph and set the cascade probability. The results of the comparison are shown in Figure \ref{fig:GNN_Comp}.

The DQN is independently defined and trained using TianShou and PyTorch for the neural network, and Gymnasium for the simulation environment. GNN uses PyTorch, PyTorch Geometric, and Gymnasium. DQN is trained over seven epochs of 1000 steps, and GNN is trained for 100 episodes. Higher training times show minimal improvement.

After training, each algorithm is evaluated using 10 random seeds, with 50 simulations per seed, each running for 30 timesteps. We collect mean cumulative reward, mean reward per timestep, mean activation percentage from each seed. All experiments can be run locally on a single RTX 3050Ti GPU in < 12 hours.

\subsection{Real World Environment}
The India contact graph (\(n = 202,\; |E| = 692\)) is adopted as a high‑fidelity synthetic contact network: each person is an individual arm and
every edge carries a fixed cascade probability \(w_{vw} = 0.03\),
representing the chance that health resources or infections propagate between close contacts. Empirical work shows that such digitally inferred graphs capture the dominant pathways of disease and behavioral diffusion in real populations~\citep{salathe2010high}.

We initialize the system with no active nodes and impose an intervention budget of \(k = 30\) actions per timestep, mimicking limited daily vaccine or test capacity. Policies are evaluated against other intervention and no‑intervention baselines so that cumulative‑reward gains translate directly into expected infections or adverse events averted.
\subsection{Baseline Algorithms}
To evaluate the effectiveness of our DQN and GNN algorithms, we compare them with three other algorithms. Tabular Q-learning solves the full Bellman equation for each state-action pair for small state-action sizes. 1-Step Look-Ahead performs hill-climbing by calculating the value of activating a node in a state and taking into account network effect but ignoring future states. Whittle Index is a traditionally optimal method for solving RMABs without considering network effects. 
\section{Results \& Discussion}
\label{sec:Results}
\subsection{Performance on Real-World Graph}
Figure \ref{fig:GNN_Comp} shows the average percentage of activated nodes over 30 timesteps on the India contact network, aggregated across simulations on 10 random seeds. The GNN-based policy consistently achieves the highest activation, converging to over 82\% by timestep 30. DQN and Whittle follow closely, leveling off around 80–81\%, while the 1-step lookahead trails slightly behind. In contrast, the no-intervention baseline stabilizes under 71\%, highlighting the effectiveness of all intervention strategies relative to doing nothing.

These results suggest that explicitly accounting for network structure can substantially improve the reach of health interventions over time. In practical terms, this means more individuals are consistently reached and maintained in a healthy state, even when resources are limited. Though in this experiment the performance gap relative to naive baselines is relatively small, the significant number and diversity of trials run suggest a statistically significant improvement. More complex scenarios should significantly increase the performance disparity in favor of GNN and DQN algorithms.

\begin{figure*}[h!]
    \centering
    \includegraphics[width=0.8\linewidth]{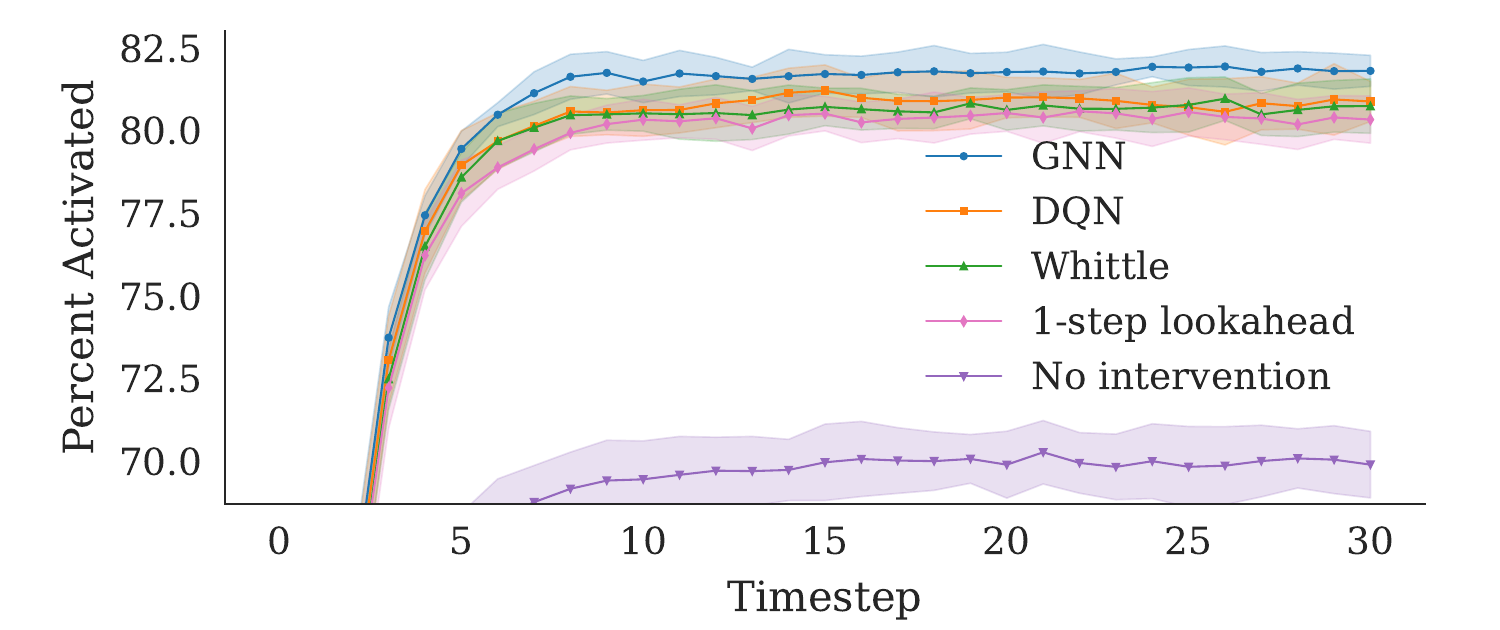}
    \caption{Mean\,$\pm$\,SD fraction of activated nodes over 30 timesteps on the India contact network (\(n=202,\ |E|=692;\ k=20;\ 10\ \text{seeds}\times 50\ \text{runs}\)) shows the GNN stabilizing near \(82\%\) activation and consistently outperforming DQN, Whittle index, 1‑step look‑ahead, and the no‑intervention baseline.}
\label{fig:GNN_Comp}
\end{figure*}
\subsection{Optimality Verification}
Figure \ref{fig:tabular} shows the performance of DQN with hill-climbing compared to Tabular Q-learning, an approach that gives near-optimal solutions at every timestep. We validate the optimality guarantee delivered by the submodular greedy algorithm as shown in Theorem \ref{thm: submod-q}: DQN performs at a very similar level to Tabular Q-learning. The extreme similarity in performance may be due to the small graph size tested, as the runtime of Tabular Q Learning rapidly explodes on larger graphs.
\subsection{Computational Cost}
Figure \ref{fig:Difference_300_nodes} shows the runtime difference between GNN, DQN with hill-climbing, and Tabular Q-learning. Tabular Q-learning runtime increases exponentially with increasing nodes, while GNN and DQN with hill-climbing increases about linearly. This aligns with theoretical runtime benefits in Algorithm \ref{alg:bellman-hillclimb} while achieving comparative performance to the optimal algorithm(see Figure \ref{fig:tabular}).

\begin{figure*}[h!]
    \centering
    \begin{minipage}{.48\textwidth}
        \centering
        \includegraphics[width=\linewidth]{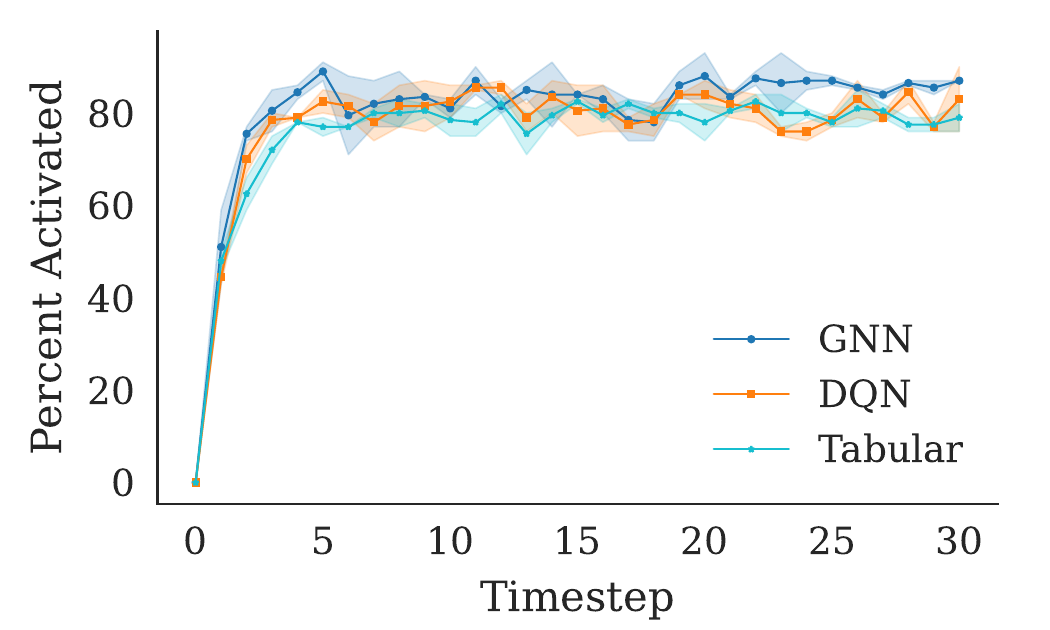}
        \caption{Mean\,$\pm$\,SD activation fraction over 30 timesteps on a 10‑node graph. DQN and GNN match tabular Q‑learning’s near‑optimal performance in networked RMABs.}
    \label{fig:tabular} 
    \end{minipage}
    \hfill
    \begin{minipage}{.48\textwidth}
        \centering
        \includegraphics[width=\linewidth]{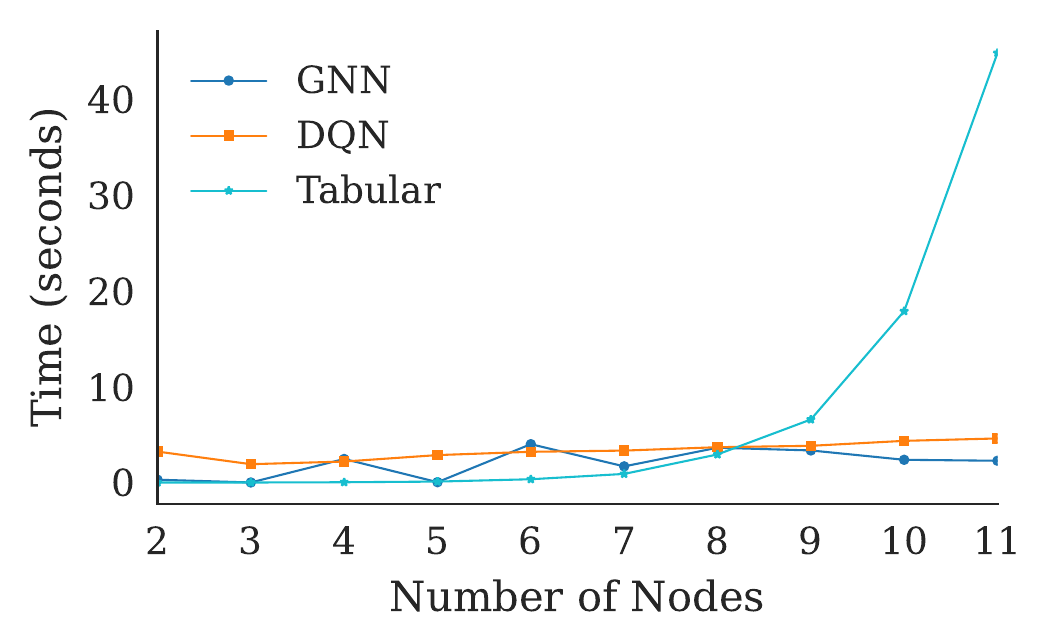}
        \caption{Total runtime (per epoch runtime for DQN and GNN) versus graph size $n$. Results reveal tabular Q‑learning’s exponential run-time growth, while DQN and GNN grow linearly.}
    \label{fig:Difference_300_nodes}
    \end{minipage}
\end{figure*}
\section{Conclusion \& Future Work}
\label{sec:conclusion}
We introduced the \emph{NRMAB} model to capture network spill‑overs in sequential resource allocation, proved its Bellman operator is both sub\-modular and a
$\gamma$‑contraction -- so a greedy hill‑climbing policy enjoys a $(1-1/e)$ guarantee -- and demonstrated a GNN implementation that outperforms strong baselines on contact‑network data. Current experiments still rely on synthetic node attributes and fixed cascade
rates; transforming real world observations into node attributes and cascade weights is pivotal to further experimentation. Current assumptions of full observability and static cascade probabilities can also be relaxed for broader applicability.

In future work, collecting data tailored to NRMAB enables experiments that bridge theory and practice. Partial observability modeled via a belief‑state (POMDP) NRMAB can align the framework with real-world deployments. Allowing edge‑specific cascade probabilities that evolve over time can capture changing behavior. Finally, our results hint at fairness–efficiency trade‑offs when node values vary widely; embedding fairness constraints directly into the hill‑climbing step could yield more socially responsible policies.

\bibliographystyle{abbrvnat}   

\bibliography{main}

\newpage
\appendix
\section{Technical Appendices and Supplementary Material}

\subsection{Full Submodularity Proof}
\label{appendix:submodularity}
\submodularity*
\begin{proof}[Proof]
We demonstrate that our implementation of the Bellman equation exhibits submodular
properties, which are crucial for the efficiency and effectiveness of greedy algorithms.
Submodularity ensures that the marginal gain of adding an element to a set decreases
as the set grows, a property leveraged in influence maximization.

Formally, we aim to show that for any $A \subseteq B \subseteq N$ and $t \notin B$:
\begin{equation}
    Q(s, A \cup \{t\}) - Q(s, A)
    \;\ge\;
    Q(s, B \cup \{t\}) - Q(s, B),
\end{equation}
where $Q(s, A)$ represents the value of taking action set $A$ in state $s$.

The reward function $R(s,a)$ is inherently submodular. We focus on the expected
future value component:
\begin{equation}
    \sigma(a) = \sum_{s',u} P_G(s' \mid u)\,P(u \mid s, a)\,V(s').
\end{equation}

To analyze $\sigma(a)$, we model the state transitions using coupled probabilistic
simulations. Specifically, for each node $v \in \mathcal{V}$, we simulate two coin flips:
\begin{itemize}
    \item $x_v$: outcome under passive action with probability $P(s = s_0, a = 0, u = u_1)$;
    \item $y_v$: outcome under active action with probability $P(s = s_0, a = 1, u = u_1)$.
\end{itemize}
We couple these coin flips such that if $x_v$ results in activation ($u = 1$), then
$y_v$ also results in activation. This coupling reflects the assumption that active
actions have transition probabilities at least as good as passive actions, ensuring:
\begin{align}
    P(s = 0, a = 1, u = 1) &\ge P(s = 0, a = 0, u = 1), \\
    P(s = 1, a = 1, u = 1) &\ge P(s = 1, a = 0, u = 1).
\end{align}

Additionally, for each edge $e \in \mathcal{E}$, we simulate a coin flip $z_e$ with
bias $p_{v,w}$ to determine if an active node activates its neighbor via a cascade.
A heads outcome denotes an active edge, leading to activation, while tails denote
no activation.

Through these coupled simulations, we can deterministically determine the set of
active nodes after applying an action. Let $\sigma_{X,Y}(A)$ denote the set of active
nodes after the Transition Step, and let $\sigma_Z(\sigma_{X,Y}(A))$ denote the set
of active nodes after both Transition and Cascade Steps. We establish the following
properties:
\begin{enumerate}
    \item $\sigma_{X,Y}(A) \subseteq \sigma_{X,Y}(B)$ for $A \subseteq B$. This is because adding more actions (from $A$ to $B$) cannot decrease the set of active nodes due to the coupling of $x_v$ and $y_v$.
    \item $\sigma_{X,Y}(A \cup \{t\}) \setminus \sigma_{X,Y}(A) = \sigma_{X,Y}(B \cup \{t\}) \setminus \sigma_{X,Y}(B) = v'$, where $v'$ represents the newly activated nodes resulting from adding action $t$. This holds because the additional action $t$ affects nodes in the same manner regardless of the existing set $A$ or $B$, thanks to the coupling ensuring $y_v \ge x_v$.
\end{enumerate}

Using these properties, the submodularity inequality can be rewritten for the
independent cascade as:
\begin{equation}
    \sigma_Z\big(\sigma_{X,Y}(A \cup \{t\})\big)
      - \sigma_Z\big(\sigma_{X,Y}(A)\big)
    \;\ge\;
    \sigma_Z\big(\sigma_{X,Y}(B \cup \{t\})\big)
      - \sigma_Z\big(\sigma_{X,Y}(B)\big).
\end{equation}
This inequality demonstrates that the number of active nodes after an action is
submodular with respect to the size of the action set.

Since we assume $V(s)$ is submodular with respect to its support set, and each active
node contributes positively to the total weighted node value, it follows that
$V(\sigma_{X,Y,Z}(A))$ is also submodular. Consequently, our expected future value
can be formulated as:
\begin{equation}
    \sigma(A) = \sum_{X,Y,Z} P(XYZ)\, V\big(\sigma_{X,Y,Z}(A)\big).
\end{equation}
This is a non-negative linear combination of submodular functions, maintaining
submodularity. Therefore, $Q(s,a)$, being a sum of submodular functions, is itself
submodular.
\end{proof}

\subsection{Failure in Traditional Proof for Bellman Equation Convergence}
\label{appendix: failure}

For any two value functions
$V,W : \mathcal{S}\!\to\!\mathbb{R}$, let
\[
  (HV)(s)=\max_{a\in\mathcal{A}}
     \Bigl\{ R(s,a)+\gamma\,\mathbb{E}[V(S')\mid s,a] \Bigr\},
\]
and define $HW$ analogously.  
Choose $a^{\star}(s)\in\arg\max_{a}Q_V(s,a)$, where
$Q_V(s,a)=R(s,a)+\gamma\mathbb{E}[V(S')\mid s,a]$.
Then
\[
  \begin{aligned}
  |(HV)(s)-(HW)(s)|
    &\;=\;
      \Bigl| Q_V\bigl(s,a^{\star}(s)\bigr)
            -\max_{a}Q_W(s,a) \Bigr|                        \\[2pt]
    &\le
      \Bigl| Q_V\bigl(s,a^{\star}(s)\bigr)
            -Q_W\bigl(s,a^{\star}(s)\bigr) \Bigr|            \\[2pt]
    &\le \gamma\,\|V-W\|_\infty,
  \end{aligned}
\]
and taking the supremum over $s$ yields the $\gamma$‑contraction.

The proof relies on re‑using the \emph{same} action
$a^{\star}(s)$ under both $V$ and $W$.
Algorithm \ref{alg:bellman-hillclimb}, however, returns
$\widetilde{a}(s,V)$ that is merely near‑optimal for $V$; when $W$
differs from $V$, the algorithm may choose an entirely different
$\widetilde{a}(s,W)$.
Consequently we can bound only
\[
  \bigl|( \widetilde{H}V)(s) - (\widetilde{H}W)(s)\bigr|
  \;\le\;
  \underbrace{
    \bigl| Q_V\bigl(s,\widetilde{a}(s,V)\bigr)
          -Q_W\bigl(s,\widetilde{a}(s,V)\bigr)
    \bigr|}_{\le\gamma\|V-W\|_\infty}
  \;+\;
  \underbrace{
    \bigl| Q_W\bigl(s,\widetilde{a}(s,V)\bigr)
          -Q_W\bigl(s,\widetilde{a}(s,W)\bigr)
    \bigr|}_{\text{\small loss from approximate actions}},
\]
and the second term has \emph{no} $\gamma$ factor.  
Hence $\widetilde{H}$ need not be a contraction, and classical
Banach‑fixed‑point arguments fail.  
The convergence analysis in Section 4.2
circumvents this obstacle by treating $B$ as a
Multi‑Bellman operator and exploiting the structure of repeated
hill‑climbing updates instead of relying on the traditional contraction argument.

\subsection{Expanded Proof of Contraction from \citet{carvalho2023multibellmanoperatorconvergenceqlearning}}
\label{appendix:contraction}
\contraction*
\begin{proof}
    Recall from Theorem \ref{thm:equiv} that $\widetilde{\mathbf{B}}^k$ 
is equivalent to the $k$-fold composition of a single-step operator $\widetilde{\mathbf{B}}$, 
where the ``state'' is the meta-state $\widetilde{s} = (s,A, t)$, and we add one action at a time.

\paragraph{Single-step contraction.} 
Take any two $Q$-functions, $Q_1$ and $Q_2$. We compute:
\[
\|\widetilde{\mathbf{B}}Q_1 - \widetilde{\mathbf{B}}Q_2\|_\infty 
~=\; 
\max_{(\widetilde{s}_0,\widetilde{a}_0)} 
\Bigl|\,
   \bigl[\widetilde{R}(\widetilde{s}_0,a_0) + \widetilde{\gamma} \max_{\,a_1}Q_1(\widetilde{s}_1,a_1)\bigr]
   ~-~
   \bigl[\widetilde{R}(\widetilde{s}_0,a_0) + \widetilde{\gamma} \max_{\,a_1}Q_2(\widetilde{s}_1,a_1)\bigr]
\Bigr|.
\]
Since $\widetilde{R}(\widetilde{s}_0,a_0)$ cancels, we get
\[
\|\widetilde{\mathbf{B}}Q_1 - \widetilde{\mathbf{B}}Q_2\|_\infty
~=~
\widetilde{\gamma} 
\max_{(\widetilde{s}_0,a_0)}
\Bigl|\,
   \max_{\,a_1}Q_1(\widetilde{s}_1,a_1) 
   ~-~
   \max_{\,a_1}Q_2(\widetilde{s}_1,a_1)
\Bigr|.
\]
Using the standard inequality $\bigl|\max_x f(x) - \max_x g(x)\bigr| \,\le\, \max_x \bigl|f(x)-g(x)\bigr|$, we obtain
\[
\|\widetilde{\mathbf{B}}Q_1 - \widetilde{\mathbf{B}}Q_2\|_\infty 
~\le~
\widetilde{\gamma}
\max_{(\widetilde{s}_0,a_0)} \bigl|Q_1(\widetilde{s}_1,a_1) - Q_2(\widetilde{s}_1,a_1)\bigr|
~=~
\widetilde{\gamma} \,\|Q_1 - Q_2\|_\infty.
\]
Hence $\widetilde{\mathbf{B}}$ is indeed a $\widetilde{\gamma}$-contraction under the supremum norm.

\paragraph{Composition into $(\widetilde{\mathbf{B}})^k$.}
By definition,
\[
(\widetilde{\mathbf{B}})^k 
~\;=\;
\underbrace{\widetilde{\mathbf{B}}
~\circ~
\widetilde{\mathbf{B}}
~\circ~
\cdots
~\circ~
\widetilde{\mathbf{B}}}_%
{k\text{ times}}.
\]
To show $(\widetilde{\mathbf{B}})^k$ is a $\gamma$-contraction, we proceed by induction on $k$:
\begin{itemize}
    \item For $k=1$, we have just shown $\widetilde{\mathbf{B}}$ itself contracts by factor~$\widetilde{\gamma}$.
    \item Assume $(\widetilde{\mathbf{B}})^k$ is a $\widetilde{\gamma}^k$-contraction. Then
    \[
    \bigl\|(\widetilde{\mathbf{B}})^{k+1} Q_1 
           ~-~
           (\widetilde{\mathbf{B}})^{k+1} Q_2
    \bigr\|_\infty
    ~=~
    \bigl\|\widetilde{\mathbf{B}}\bigl[(\widetilde{\mathbf{B}})^kQ_1\bigr] 
           ~-~
           \widetilde{\mathbf{B}}\bigl[(\widetilde{\mathbf{B}})^kQ_2\bigr]
    \bigr\|_\infty
    ~\le~
    \widetilde{\gamma}\,\|(\widetilde{\mathbf{B}})^kQ_1 - (\widetilde{\mathbf{B}})^kQ_2\|_\infty
    \]
    by the single-step contraction. Applying the induction hypothesis,
    \[
    \|(\widetilde{\mathbf{B}})^kQ_1 - (\widetilde{\mathbf{B}})^kQ_2\|_\infty
    ~\le~
    \widetilde{\gamma}^k\,
    \|Q_1 - Q_2\|_\infty.
    \]
    Consequently,
    \[
    \bigl\|(\widetilde{\mathbf{B}})^{k+1} Q_1 
           ~-~
           (\widetilde{\mathbf{B}})^{k+1} Q_2
    \bigr\|_\infty
    ~\le~
    \widetilde{\gamma} \,\widetilde{\gamma}^k\,
    \|Q_1 - Q_2\|_\infty
    ~=~
    \widetilde{\gamma}^{k+1}\,\|Q_1 - Q_2\|_\infty.
    \]
\end{itemize}
Thus by induction, $(\widetilde{\mathbf{B}})^k$ is a $\widetilde{\gamma}^k$-contraction. By definition \ref{def:new_bell_op} $\widetilde{\gamma}^k=\gamma$, thus $\widetilde{B}^k$ is a $\gamma$ contraction under the supremum norm $\|\cdot\|_\infty$.

\paragraph{Conclusion.} 
Since our ``Bellman equation With hill-climbing action selection'' is equivalent to $(\widetilde{\mathbf{B}})^k$, we conclude it is a $\gamma$-contraction in the sup norm. Hence, like the standard multi-Bellman operator of \cite{carvalho2023multibellmanoperatorconvergenceqlearning}, it converges to a unique fixed point under $0\le \gamma<1$ and bounded rewards.
\end{proof}
\subsection{Full Proof of Equivalence for Definition \ref{def:bellman-hillclimb} and \ref{def:new_bell_op}} 
\label{appendix:equiv}
\equivalence*
    We introduce the concept of a Multi-Bellman Operator defined by \cite{carvalho2023multibellmanoperatorconvergenceqlearning}. Given Bellman operator $H$ and $Q$ function $q$, a Multi-Bellman operator is defined as:

\[
(\mathbf{H}^n q)(x_0,a_0)
~=~
\mathbb{E}\Bigl[
  r(x_0,a_0)
  ~+~
  \gamma\,\max_{a_1\in\mathcal{A}}
    \mathbb{E}\Bigl[
      r(x_1,a_1)
      ~+~
      \gamma\,\max_{a_2\in\mathcal{A}}
        \mathbb{E}\Bigl[
          \cdots
          ~+~
          \gamma\,\max_{a_n\in\mathcal{A}}\,q\bigl(x_n,a_n\bigr)
        \Bigr]
    \Bigr]
\Bigr].
\]

From definition \ref{def:new_bell_op}, we can reduce the algorithm from definition \ref{def:bellman-hillclimb} into a Multi-Bellman operator as such:
\[
(\widetilde{\mathbf{B}}^k Q)(s,a)
~=~
\mathbb{E}\Bigl[
  \widetilde{R}(\widetilde{s_0},a_0)
  ~+~
  \gamma\,\max_{a_1\in\mathcal{A}}
    \mathbb{E}\Bigl[
      \widetilde{R}(\widetilde{s_1}, a_1)
      ~+~
      \gamma\,\max_{a_2\in\mathcal{A}}
        \mathbb{E}\Bigl[
          \cdots
          ~+~
          \gamma\,\max_{a_{k-1}\in\mathcal{A}}\,Q\bigl(\widetilde{s}_{k-1}, a_{k-1}\bigr)
        \Bigr]
    \Bigr]
\Bigr].
\]

In order words, we apply Bellman Operator for Hill-Climbing Action Selection once per action until we reach state $\widetilde{s}_{k-1}$, which represents $(s,\{a_0,...,a_k\}, k-1)$. We can then apply the actions in $\widetilde{s}_k$ to $s$ to transition into state $s'$. 
\begin{proof}
    We proceed by showing that applying $k$ iterations of the Bellman Operator for Hill-Climbing Variant is equivalent to one iteration of the Bellman equation with Hill-Climbing Action Selection, which is the value $R(s, \{a_0, a_1,...,a_{k-1}\}) + \gamma V(s')$.
\[
    \widetilde{B}^kV(s) = \max_{a_0 \in \mathcal{A}}\Bigl[\widetilde{R}(\widetilde{s}_0, a_0) + \widetilde{\gamma}\Bigl[\max_{a_1 \in \mathcal{A}\setminus A }(\widetilde{R}(\widetilde{s}_1, a_1) + \widetilde{\gamma}\Bigl[...+\widetilde{\gamma}\max_{a_k \in \mathcal{A}\setminus A }(\widetilde{R}(\widetilde{s}_{k-1}, a_{k-1}))+\widetilde{\gamma}V(\widetilde{s}_k) \Bigl]\Bigl]\Bigl]
\]
    Expanding this and simplifying, we get:
\[
    \widetilde{B}^kV(s) = (R(s, \{a_0\}) - R(s, \varnothing)) + \widetilde{\gamma}\Bigl[\frac{1}{\widetilde{\gamma}}(R(s, \{a_0, a_1\}) - R(s, \{a_0\})\Bigl] + \widetilde{\gamma}^2\Bigl[\frac{1}{\widetilde{\gamma}^2}(R(s, \{a_0, a_1, a_2\}) - R(s, \{a_0, a_1\})\Bigr] 
\]
\[
    + ...+ \widetilde{\gamma}^{k-1}\Bigl[\frac{1}{\widetilde{\gamma}^{k-1}}(R(s, \{a_0,...,a_{k-1}\}) - R(s, \{a_0,...,a_{k-2}\}) + \widetilde{\gamma}^kV(\widetilde{s}_k)\Bigr] 
\]

Notice that $R(s, \{a_0\}) - \widetilde{\lambda}\frac{1}{\lambda}(R(s,\{a_0\}) = 0$, $\widetilde{\lambda}\frac{1}{\widetilde{\lambda}}R(s, \{a_0, a_1\}) - \widetilde{\lambda}^2\frac{1}{\widetilde{\lambda}^2}R(s, \{a_0, a_1\}) = 0$, and so on. Thus, our $\widetilde{B}^kV(s)$ simplifies to 
\[
 \widetilde{B}^kV(s) = -R(s,\widetilde{\varnothing}) + R(s, \{a_0,...,a_{k-1}\})
\]
Since $R(s,\varnothing) = 0$ and $\widetilde{\gamma}^k = \gamma$, we ultimately arrive at 
\[ 
\widetilde{B}^kV(s) = R(s, \{a_0,...,a_{k-1}\}) + \gamma V(\widetilde{s_k})
\]
However, at $s_k$, $t=k$ and by Definition \ref{def:new_bell_op} we transition our state into $\widetilde{s}'=(s',\varnothing,0)$ which can be simplified to $s'$. So, with an additional application of $\widetilde{B}$, our final formulation becomes
\[ 
\widetilde{B}^{k}V(s) = R(s, \{a_0,...,a_{k-1}\}) + \gamma V(s')
\]

Note that by Definition \ref{def:new_bell_op}, we have selected the exact same actions as we would have using Bellman equation with Hill-Climbing Action Selection, and the final value of $\widetilde{B}^kV(s)$ is equivalent to that of $BV(s)$. This concludes our proof.
\end{proof}

\end{document}